\documentclass{article}

\usepackage{microtype}
\usepackage{booktabs} 

\usepackage[colorlinks=true, citecolor=blue]{hyperref}

\usepackage[preprint]{icml2026}

\PassOptionsToPackage{numbers,compress,sort&compress}{natbib}
\usepackage{amsmath}
\usepackage{amssymb}
\usepackage{mathtools}
\usepackage{amsthm}
\usepackage{pifont}
\usepackage{multicol,multirow}
\usepackage{color,xcolor,colortbl}
\usepackage{arydshln}
\usepackage{natbib}
\usepackage{enumitem}
\usepackage{threeparttable}
\usepackage{subfigure}
\usepackage{subcaption}
\usepackage{graphicx}
\usepackage{bbding} 
\usepackage{setspace}
\usepackage{algorithm}
\usepackage{algorithmic}
\usepackage[misc]{ifsym}
\usepackage{xspace}

\usepackage[capitalize,noabbrev]{cleveref}

\usepackage[textsize=tiny]{todonotes}
\usepackage{fontawesome5}

\definecolor{forestgreen}{RGB}{79,173,91}
\definecolor{forestyellow}{RGB}{245,195,66}
\definecolor{myblue}{rgb}{0.82, 0.94, 0.75}
\definecolor{mygold}{rgb}{1, 0.92, 0.56}
\definecolor{mylightblue}{rgb}{0.70, 0.83, 0.96}
\definecolor{mylightyellow}{rgb}{0.96, 0.88, 0.49}
\definecolor{mylightpink}{rgb}{0.93, 0.79, 0.80}

\theoremstyle{plain}
\newtheorem{theorem}{Theorem}[section]

\theoremstyle{definition}

\theoremstyle{remark}

\newcommand{\method}{UARM\xspace}

\usepackage[most]{tcolorbox}

\tcbuselibrary{breakable} 
\tcbuselibrary{skins}
\newtcolorbox[
    use counter=tcboxcounter,number within=section
]{mybox}[3][]{
    left=3pt,
    right=4pt,
    breakable,
    enhanced,
    title=#2 \thetcbcounter: #3,
    #1
}

\makeatletter
\renewcommand{\printAffiliationsAndNotice}[1]{\global\icml@noticeprintedtrue%
  \stepcounter{@affiliationcounter}%
  {\let\thefootnote\relax\footnotetext{\hspace*{-\footnotesep}\ificmlshowauthors #1\fi%
      \forloop{@affilnum}{1}{\value{@affilnum} < \value{@affiliationcounter}}{
        \textsuperscript{\arabic{@affilnum}}\ifcsname @affilname\the@affilnum\endcsname%
          \csname @affilname\the@affilnum\endcsname%
        \else
          {\bf AUTHORERR: Missing \textbackslash{}icmlaffiliation.}
        \fi
      }.%
      \ifdefined\icmlcorrespondingauthor@text
         { }Correspondence to: \icmlcorrespondingauthor@text.
      \fi
      
      \ \\
      \Notice@String
    }
  }
}
\makeatother

\icmltitlerunning{Uncertainty-Aware Reward Modeling for Stable RLHF}

\begin{document}

\twocolumn[
  \icmltitle{Uncertainty-Aware Reward Modeling for Stable RLHF}
  
  \icmlsetsymbol{corr}{\faEnvelope[regular]}
  
  \begin{icmlauthorlist}
    \icmlauthor{Licheng Pan}{zju,red}
    \icmlauthor{Haocheng Yang}{nus}
    \icmlauthor{Haoxuan Li}{pku}
    \icmlauthor{Yichen Sun}{zju}
    \icmlauthor{Yunsheng Lu}{zju}
    \icmlauthor{Shijian Wang}{red} \\
    \icmlauthor{Lei Shen}{red}
    \icmlauthor{Yuan Lu}{red}
    \icmlauthor{Zhixuan Chu}{zju}
    \icmlauthor{Hao Wang}{zju,red}
  \end{icmlauthorlist}

  \icmlaffiliation{zju}{Zhejiang University}
  \icmlaffiliation{red}{Xiaohongshu Inc}
  \icmlaffiliation{pku}{Peking University}
  \icmlaffiliation{nus}{National University of Singapore}


  \icmlkeywords{Large Language Model, Reward Modeling, Optimal Transport, Noisy Preference}

  \vskip 0.3in
]



\printAffiliationsAndNotice{}  

\begin{abstract}
Reinforcement learning from human feedback (RLHF) aligns large language models by training reward models on preference data and optimizing policies to maximize predicted rewards. However, this pipeline faces two fundamental challenges: \ding{182} \textbf{reward models cannot signal when their predictions are unreliable}, since they usually act as deterministic point estimators; and \ding{183} \textbf{modern group-based policy optimization can amplify unreliable reward signals}, as exemplified by GRPO's uniform treatment of rewards during advantage computation. As policies explore increasingly diverse responses, these two limitations create a critical vulnerability: unreliable reward estimates may be granted disproportionate influence, triggering severe reward hacking. We propose \textbf{Uncertainty-Aware Reward Modeling (UARM)}, which equips reward models with calibrated uncertainty via quantile-based conformal prediction and reweights GRPO advantages through heteroscedastic variance decomposition. Experiments across HelpSteer, UltraFeedback, and PKU-SafeRLHF demonstrate that UARM significantly improves reward model calibration, reduces reward hacking, and enhances downstream alignment quality compared to standard GRPO and uncertainty-agnostic baselines.
\end{abstract}

\section{Introduction}

Reinforcement learning from human feedback~\citep{christiano2017deep,rlhf} has emerged as the dominant paradigm for aligning large language models with human values and preferences. In this framework, a reward model is first trained on pairwise preference data—typically modeled via the Bradley-Terry comparison model~\citep{bradleyterry}—to proxy human judgment, and the policy is then optimized to maximize the predicted reward~\citep{dong2024rlhf}. Recent state-of-the-art systems, from GPT-4~\citep{achiam2023gpt} to DeepSeek-R1~\citep{guo2025deepseek} and Gemini~\citep{comanici2025gemini}, rely heavily on this pipeline to produce helpful, harmless, and honest responses. Yet a fundamental question remains unresolved: \textit{when reward models are uncertain about their predictions, should policies trust them unconditionally?}

This question reveals two central challenges that become especially consequential in modern group-based policy optimization methods such as Group Relative Policy Optimization (GRPO)~\citep{guo2025deepseek,gspo}.
\ding{182} \textbf{Reward models cannot signal when their predictions are unreliable.} Current reward models are deterministic point estimators: they output a single scalar score for each prompt-response pair, with no indication of whether that score reflects a confident judgment or an unreliable guess~\citep{lambert2025rewardbench}. As policies evolve during training, they inevitably generate responses that lie outside the reward model's training distribution---responses the model evaluates with high uncertainty. Without any signal of this uncertainty, the policy treats all reward estimates as equally trustworthy, even when some are fundamentally unreliable. This blind trust creates a vulnerability: the policy may aggressively optimize toward responses the reward model scores highly but uncertainly, leading to misalignment and reward hacking~\citep{amodei2016concrete,gao2023scaling,skalse2022defining}.

\ding{183} \textbf{Group-based advantage standardization can amplify the least trustworthy samples.} In GRPO, advantages are standardized uniformly within each rollout group. By treating all reward signals as equally reliable, this standardization procedure can amplify exactly the samples the reward model is least certain about: a confusing response that receives a spuriously extreme reward skews the group mean and variance, and after standardization is assigned a disproportionately large advantage. Meanwhile, genuinely high-quality responses are pushed below the distorted mean and under-rewarded. This uniform treatment thus triggers severe reward hacking, steering policies toward unreliable signals and away from truly aligned behavior~\citep{fu2025reward,InfoRM,liu2024rrm}.

Prior work has explored uncertainty quantification for neural networks, but their application to stable RLHF remains limited. Ensemble-based methods~\citep{deep_ensemble,mcdropout} provide uncertainty estimates by training multiple models or performing stochastic forward passes, but incur prohibitive computational overhead for large language models deployed in online RLHF~\citep{coste2023reward,eisenstein2023helping}. Conformal prediction~\citep{scp,cqr,lei2018distribution,shafer2008tutorial} offers distribution-free coverage guarantees for uncertainty intervals, but classical variants focus on marginal coverage and are not readily adapted to the conditional, sample-specific reliability signals needed for reweighting advantages in reinforcement learning~\citep{wcp,aci}. Heteroscedastic modeling techniques~\citep{mve,der} can capture varying uncertainty across inputs, yet have not been integrated into RLHF's advantage computation. Consequently, no prior work systematically addresses the gap between calibrated uncertainty quantification in reward models and its direct integration into policy optimization to prevent reward hacking.

We propose \textbf{Uncertainty-Aware Reward Modeling (UARM)}, a unified framework that equips reward models with calibrated uncertainty and leverages it to stabilize GRPO. Our approach operates in two phases. In the \textit{offline phase}, we train the reward model as a quantile regression estimator~\citep{pinball} that outputs multiple conditional quantiles of the reward distribution, from which we derive both a point estimate (the median) and a prediction interval whose width captures per-sample uncertainty. We calibrate these intervals on a held-out set via a conformal prediction procedure~\citep{cqr}, achieving conditional coverage guarantees (Theorem~\ref{thm:conditional}) that ensure the interval width faithfully reflects the model's confidence. In the \textit{online phase}, we reinterpret the interval width as observation noise under a heteroscedastic model and decompose the observed reward variance into signal and noise components. This variance decomposition yields a sample-specific reliability weight for each rollout, which we use to construct a heteroscedastic advantage that provably down-weights high-uncertainty samples without requiring costly ensemble evaluations. The entire pipeline integrates seamlessly into GRPO with negligible computational overhead.

Our contributions are summarized as follows:
\begin{itemize}[leftmargin=*]
    \item We develop a quantile-based conformal reward model that provides calibrated per-sample uncertainty estimates with theoretical coverage guarantees.
    \item We introduce a heteroscedastic advantage reweighting scheme that uses uncertainty to suppress unreliable samples in GRPO's standardization.
    \item We conduct comprehensive experiments across three preference datasets, demonstrating that UARM improves reward model calibration, reduces reward hacking, and enhances downstream alignment quality against strong baselines.
\end{itemize}

\section{Preliminaries}
\label{sec:preliminaries}

\subsection{Reinforcement Learning from Human Feedback}

The standard RLHF pipeline typically consists of two sequential stages~\citep{rlhf}: reward modeling and policy optimization. First, a reward model (RM) $r_\theta: \mathcal{X} \to \mathbb{R}$ parameterized by $\theta$ is trained on an offline dataset comprising human preferences. With point-wise or pair-wise optimization~\citep{wang2026causalrm}, it learns to map a prompt-response pair $x = (p, o) \in \mathcal{X}$ (where $p$ is the prompt and $o$ is the generated response) to a scalar reward $r \in \mathbb{R}$. Subsequently, the policy model $\pi_\phi$ (i.e., the LLM) is optimized via reinforcement learning to maximize the expected rewards assigned by the learned RM.

Recently, value-function-free algorithms, particularly Group Relative Policy Optimization (GRPO)~\citep{guo2025deepseek}, have emerged as the mainstream paradigm for policy optimization due to their efficiency. For a given prompt $q$, GRPO samples a group of $\mathrm{N}_\mathrm{rol}$ responses $\{o_i\}_{i=1}^{\mathrm{N}_\mathrm{rol}}$ from the old policy $\pi_{\phi_\mathrm{old}}$. The RM evaluates these prompt-response pairs $\{x_i\}_{i=1}^{\mathrm{N}_\mathrm{rol}}$ to obtain raw terminal rewards $r_i = r_\theta(x_i)$. To stabilize training without the computational burden of maintaining a critic model, GRPO computes the advantage $A_i = (r_i - \mu)/\sigma$ by standardizing the rewards within the sampled group, where $\mu = \frac{1}{\mathrm{N}_\mathrm{rol}}\sum_{i=1}^{\mathrm{N}_\mathrm{rol}} r_i$ and $\sigma^2 = \frac{1}{\mathrm{N}_\mathrm{rol}}\sum_{i=1}^{\mathrm{N}_\mathrm{rol}} (r_i - \mu)^2$ are the intra-group mean and variance, respectively.
Building upon the advantage $A_i$, the policy model $\pi_\phi$ is optimized as follows:
\begin{align}\label{eq:GRPO_loss}
\mathcal{L}_{\mathrm{GRPO}}(\phi) &= \mathbb{E} \left[ \frac{1}{\mathrm{N}_\mathrm{rol}} \sum_{i=1}^{\mathrm{N}_\mathrm{rol}} \left( \mathcal{L}_{i}^\mathrm{CLIP}(\phi) - \beta \mathcal{L}_{i}^\mathrm{KL}(\phi) \right) \right] , \\
\mathcal{L}_i^\mathrm{CLIP}(\phi) &= \min \left( \rho_i(\phi) A_i, \text{clip}\big(\rho_i(\phi), 1-\epsilon, 1+\epsilon\big) A_i \right), \\
\mathcal{L}_i^\mathrm{KL}(\phi) &= \gamma_i(\phi) - \log \gamma_i(\phi) - 1,
\end{align}
where $\beta$ controls the KL penalty strength, $\epsilon$ is the clipping parameter, $\rho_i(\phi) = \frac{\pi_\phi(o_i \mid q)}{\pi_{\phi_\mathrm{old}}(o_i \mid q)}$ is the probability ratio between the current policy and the old policy, and $\gamma_i(\phi) = \frac{\pi_\mathrm{ref}(o_i \mid q)}{\pi_\phi(o_i \mid q)}$ is the probability ratio of the reference policy to the current policy. The advantage $A_i$ acts as a multiplicative weight, directly scaling the policy update.

While GRPO effectively stabilizes optimization under ideal conditions, its intra-group standardization treats every sample in a group \emph{uniformly}, implicitly assuming that all reward signals $r_i$ are equally reliable. This homogeneous treatment is problematic because the RM is not equally confident about every rollout: as the policy $\pi_\phi$ continuously evolves~\citep{InfoRM}, it produces increasingly diverse responses, many of which the RM finds confusing and scores unreliably. Crucially, the standardization in GRPO is oblivious to this unreliability. When the RM assigns a confusing rollout a spuriously extreme reward, this single outlier inflates the group statistics $\mu$ and $\sigma$ and, after standardization, is granted a disproportionately large advantage. The policy is thus pushed to imitate exactly those samples the RM is least certain about, while genuinely high-quality responses are squeezed toward (or below) the mean and under-rewarded. This amplification of unreliable signals, induced by the uniform advantage weighting, lies at the heart of reward hacking~\citep{fu2025reward} and unstable training.

\subsection{Uncertainty Quantification}\label{sec:prelim_uq}

Uncertainty quantification (UQ) aims to equip a deterministic predictor with a measure of how reliable its outputs are~\citep{clear}. Instead of returning a single scalar $r_\theta(x)$, a UQ method augments the RM with a prediction interval $\mathcal{I}(x)=[r_\mathrm{lo}(x), r_\mathrm{hi}(x)]$ that is expected to contain the unobserved ground-truth reward $R$ with high probability. Formally, given a target miscoverage rate $\alpha \in (0,1)$, the desired marginal coverage property requires
\begin{equation}\label{eq:marginal_coverage}
\mathbb{P}\big[R \in \mathcal{I}(X)\big] \ge 1-\alpha .
\end{equation}
However, it only constrains the coverage on average over $X$, which may mask variation across single input. A more desirable guarantee is conditional coverage,
\begin{equation}\label{eq:conditional_coverage}
\mathbb{P}\big[R \in \mathcal{I}(X) \mid X = x\big] \ge 1-\alpha, \quad \forall x,
\end{equation}
which a sound procedure approximates in practice and attains asymptotically under appropriate conditions. Conditional coverage is essential in our setting: only by reflecting the conditional reward distribution $\mathbb{P}_{R\mid X}$ can the interval width faithfully capture the reliability of each individual prompt-response pair, rather than an average over the rollout group. Among intervals meeting these coverage targets, narrower ones are preferred, as they yield a sharper and more discriminative uncertainty measure.

The half-width of the interval, $\Delta(x) = \tfrac{1}{2}\,(r_\mathrm{hi}(x) - r_\mathrm{lo}(x))$, thus serves as a natural, instance-wise measure of predictive uncertainty: a wide interval signals that the RM is confused and evaluates the sample unreliably, whereas a narrow one reflects a confident and trustworthy prediction. This is precisely the reliability signal that GRPO's uniform standardization lacks, as it allows us to distinguish rollouts the RM scores confidently from those it merely guesses at, preventing unreliable samples from dominating the advantage. To enforce \eqref{eq:marginal_coverage} without distributional assumptions on the data, we reserve a held-out calibration set $\mathcal{D}_{\mathrm{cal}}$, drawn from the same train distribution as $\mathcal{D}_{\mathrm{tr}}$, for interval calibration.

\section{Methodology}

\subsection{Motivation}\label{sec:motivation}

The reliability of reward signals is the cornerstone of stable RLHF. In the standard pipeline, a reward model is trained on a static preference dataset to proxy human values, and the policy is then optimized to maximize its scores. However, GRPO aggregates rewards within each rollout group through intra-group standardization, which weights every sample uniformly and presumes that all reward estimates are equally trustworthy. In practice this assumption breaks down: as the policy explores, it generates responses the RM is confused about and scores unreliably. Because standardization is blind to this unreliability, a confusing sample that happens to receive an extreme score is granted an outsized advantage, steering the policy toward unreliable signals and triggering severe reward hacking~\citep{amodei2016concrete,fu2025reward,InfoRM} that misguides the optimization process.

Eliminating this homogeneous treatment of rewards in GRPO introduces two fundamental challenges.
\ding{182} \textbf{Reward models cannot signal when their predictions are unreliable.} They output a single scalar score for any given prompt-response pair, providing no indication of whether a particular rollout is evaluated reliably or merely guessed at. Consequently, there is no way to tell trustworthy reward estimates apart from confusing, unreliable ones.
\ding{183} \textbf{GRPO standardization amplifies exactly the samples that are least trustworthy.} By standardizing rewards within a generated group, GRPO treats all signals as equally reliable. A confusing rollout that receives a spuriously extreme reward thus skews the group's mean and variance and, after standardization, is assigned a disproportionately large advantage, while well-evaluated, high-quality responses are pushed below the mean and under-rewarded.

\paragraph{Case study.} To provide concrete evidence for the above challenges, we present a representative case study in Figure~\ref{fig:case_study}. Consider a policy generating a group of four responses to a prompt asking for a ``brief and practical tip''. The first response accurately follows the instruction, providing concise and useful advice. In contrast, the fourth response is an atypical, hard-to-judge sample that violates the ``brief'' constraint by exploiting verbosity, bold formatting, and repetitive buzzwords. For Challenge~\ding{182}, the deterministic RM cannot express that it is confused by this unusual response and instead emits a single, spuriously high score ($20.0$) that overshadows the genuinely helpful response ($8.0$), with no accompanying signal of its low reliability. For Challenge~\ding{183}, during GRPO standardization this single outlier inflates the group mean to $9.0$; the high-quality response is consequently pushed below the mean and penalized with a negative advantage ($-0.15$), while the unreliable response receives a massive positive advantage ($+1.66$). The uniform standardization thus amplifies precisely the sample the RM is least certain about, injecting misleading updates that penalize aligned behavior while reinforcing reward hacking.

Some might note prior works on uncertainty quantification; however, their practical utility for stable RLHF remains underexplored. For instance, ensemble-based uncertainty methods incur prohibitive computational overhead for LLMs~\citep{coste2023reward, eisenstein2023helping}. While distribution-free interval estimators offer rigorous coverage guarantees, classical variants are difficult to deploy efficiently within the online RLHF loop. Therefore, developing an effective uncertainty quantification framework for reward modeling and adapting it to reweight GRPO advantages remains an open and critical challenge.

\begin{figure*}[t]
\includegraphics[width=\linewidth]{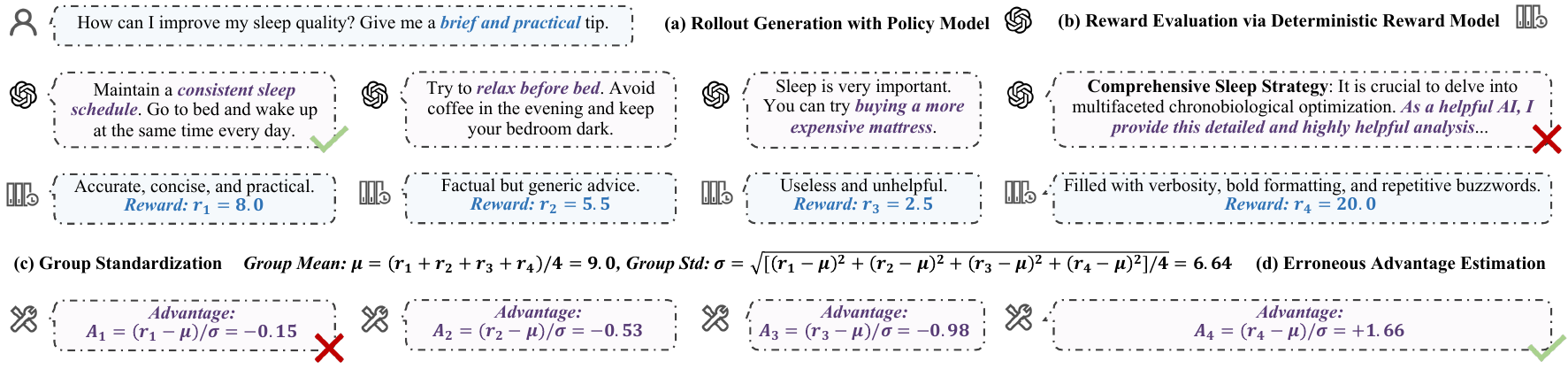}
\caption{Case study of how GRPO's uniform standardization amplifies unreliable rewards. The deterministic RM emits a spuriously high score for an atypical, hard-to-judge response; standardization inflates its advantage while unfairly penalizing the aligned response.}
\label{fig:case_study}
\end{figure*}

\begin{figure}[t]
\includegraphics[width=\linewidth]{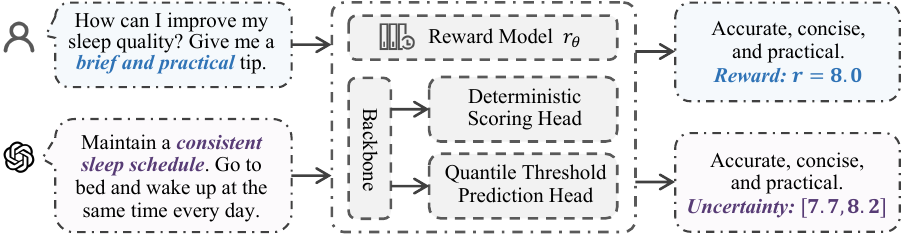}
\caption{Framework of our proposed \method. The offline phase equips the reward model with calibrated uncertainty estimation, and the online phase reweights the GRPO advantage by the estimated interval width to suppress unreliable samples.}
\label{fig:framework}
\end{figure}


\subsection{Reward Model Uncertainty Estimation}\label{sec:uq_rm}

To resolve Challenge~\ding{182}, instead of treating the RM as a deterministic point estimator, we model the reward through uncertainty quantification: the RM directly estimates the conditional reward distribution via a set of quantiles, from which both a point reward and an adaptive prediction interval are derived for every prompt-response pair. 

\paragraph{Quantile Estimation.} Rather than producing a single deterministic score, the RM is parameterized to output $\mathrm{K}{+}1$ equiprobable conditional quantiles of the distribution $\mathbb{P}_{R|X}$,
\begin{equation}\label{eq:quantile_head}
\hat{q}_0(x) \le \hat{q}_1(x) \le \cdots \le \hat{q}_{\mathrm{K}}(x),
\end{equation}
where $\hat{q}_k(x)$ estimates the conditional quantile at level $\tau_k = k/\mathrm{K}$. All quantile outputs are trained jointly on $\mathcal{D}_{\mathrm{tr}}$ by minimizing the pinball loss~\citep{pinball}
\begin{equation}\label{eq:pinball}
\begin{aligned}
\mathcal{L}_{\mathrm{pinball}}(\theta) &= \frac{1}{|\mathcal{D}_{\mathrm{tr}}|}\sum_{i=1}^{|\mathcal{D}_{\mathrm{tr}}|}\sum_{k=0}^{\mathrm{K}} \rho_{\tau_k}\!\big(r_i - \hat{q}_k(x_i)\big), \\
\rho_{\tau}(u) &= \tau\cdot\max(0, u) + (1-\tau)\cdot\max(0, -u),
\end{aligned}
\end{equation}
where the check function $\rho_{\tau}(\cdot)$ penalizes under- and over-estimation asymmetrically according to the target level $\tau$, so that its minimizer recovers the conditional $\tau$-quantile.

We take the median quantile as the point reward consumed by GRPO, i.e., $r_\theta(x) \triangleq \hat{q}_{\mathrm{K}/2}(x)$\footnote{We let $\mathrm{K}$ be even so that $\mathrm{K}/2$ indexes the median; otherwise the quantile nearest to the median is used.}, so that the scalar score and its surrounding uncertainty are jointly produced by a single quantile-based model. The consecutive quantiles partition the reward axis into $\mathrm{K}$ interquantile intervals
\begin{equation}\label{eq:interquantile}
\mathcal{I}_k(x) = \big(\hat{q}_{k-1}(x),\, \hat{q}_k(x)\big], \quad k = 1,\dots,\mathrm{K},
\end{equation}
each carrying approximately probability mass $1/\mathrm{K}$. Narrow intervals indicate confident regions, while wide intervals signal uncertain regions, which naturally captures the skewness and heteroscedasticity of reward distributions.

\paragraph{Conformity Score.} For any integer $m$, let $\mathcal{J}_m(x)$ denote the shortest union of $m$ consecutive interquantile intervals
\begin{equation}\label{eq:Ck}
\begin{aligned}
\mathcal{J}_m(x) &= \big(\hat{q}_{k_m}(x),\, \hat{q}_{k_m+m}(x)\big], \\
k_m &= \mathop{\arg\min}_{0\le k\le \mathrm{K}-m}\big(\hat{q}_{k+m}(x)-\hat{q}_{k}(x)\big),
\end{aligned}
\end{equation}
where $k_m$ is the lower-endpoint index of the narrowest $m$-interval block. For each calibration sample $(x_i, r_i)\in\mathcal{D}_{\mathrm{cal}}$, we define the conformity score as the minimum number of interquantile intervals needed to cover the observed reward,
\begin{equation}\label{eq:score}
s(x_i, r_i) = \min\big\{ m \in \{1,\dots,\mathrm{K}\} : r_i \in \mathcal{J}_m(x_i) \big\}.
\end{equation}
Intuitively, samples that fall into long intervals receive larger scores, while those in short intervals receive smaller ones.

\paragraph{Calibration and Prediction.} Following the standard thresholding principle, we set $\hat{m}$ to the $n$-th smallest $s(x_i, r_i)$, where $n = \big\lceil (1-\alpha)(1+|\mathcal{D}_\mathrm{cal}|) \big\rceil$.
So that at least a $(1-\alpha)$ fraction of calibration responses are covered. For a new rollout sample $x^{\mathrm{new}}$, the prediction interval is
\begin{equation}\label{eq:interval}
\mathcal{I}(x^{\mathrm{new}}) = \mathcal{J}_{\hat{m}}(x^{\mathrm{new}})=\big(\hat{q}_{k_{\hat{m}}}(x^{\mathrm{new}}), \hat{q}_{k_{\hat{m}}+\hat{m}}(x^{\mathrm{new}})\big].
\end{equation}
Crucially, the width of this interval expands automatically for responses whose rewards fall in sparsely supported, low-density regions. These are precisely the samples the RM evaluates unreliably, and the resulting width provides the uncertainty signal needed to reweight the GRPO advantage.

\paragraph{Theoretical Guarantees.} Our construction attains conditional coverage, which is what makes the interval width a faithful per-sample reliability signal for the reweighting in Section~\ref{sec:uncertain_advantage}. We use the following standard assumptions in the coverage analysis: the calibration samples and test point are exchangeable; for conditional coverage, calibration/test samples are i.i.d., the learned quantiles consistently approximate the true conditional reward distribution, and the conditional reward distribution is unimodal so that the merged intervals are nested and become wider as $m$ increases.

\begin{theorem}[Marginal Coverage]\label{thm:marginal}
If the calibration set $\mathcal{D}_{\mathrm{cal}}$ and a rollout point $(X,R)$ are exchangeable, then the prediction interval $\mathcal{I}=\mathcal{J}_{\hat{m}}$ satisfies
\begin{equation}
\mathbb{P}[R\in \mathcal{J}_{\hat{m}}(X)]\ge 1-\alpha.
\end{equation}
\end{theorem}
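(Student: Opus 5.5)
Since the fitted quantiles $\hat q_0,\dots,\hat q_{\mathrm K}$ are trained on $\mathcal{D}_{\mathrm{tr}}$, which is disjoint from $\mathcal{D}_{\mathrm{cal}}$, I would treat the map $(x,r)\mapsto s(x,r)$ in \eqref{eq:score} as a fixed, deterministic function when conditioning on $\mathcal{D}_{\mathrm{tr}}$. The proof is then the standard split-conformal argument with this integer-valued conformity score. Write $n_c=|\mathcal{D}_{\mathrm{cal}}|$, $S_i=s(x_i,r_i)$ for the calibration points and $S_{n_c+1}=s(X,R)$ for the rollout point. Applying a fixed function coordinatewise preserves exchangeability, so $(S_1,\dots,S_{n_c+1})$ is exchangeable.

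\textbf{Step 1: reduce coverage to a rank event.} I would first check that $R\in\mathcal{J}_{\hat m}(X)$ holds whenever $S_{n_c+1}\le \hat m$. A direct reading of the score definition gives $R\in\mathcal{J}_{S_{n_c+1}}(X)$, and we need to pass from $m=S_{n_c+1}$ to $m=\hat m\ge S_{n_c+1}$. This is the step I expect to be the main obstacle. In general the shortest $m$-blocks $\mathcal{J}_m(x)$ need not be nested in $m$.

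My first move is to redefine the score as $\min\{m: r\in \mathcal{J}_{m'}(x)\ \forall m'\ge m\}$, and use the event $\{S\le\hat m\}$ directly. This is equivalent to the paper's score whenever the blocks are nested, which is exactly the unimodality condition stated before the theorem. Under either the nestedness assumption or this monotonized score, $\{S_{n_c+1}\le \hat m\}\subseteq\{R\in\mathcal{J}_{\hat m}(X)\}$. The edge case $m=\mathrm K$ gives the whole range $(\hat q_0,\hat q_{\mathrm K}]$. If $n>n_c$, I set $\hat m=\mathrm K$, with the interval extended to $\mathbb R$ by convention, so coverage is trivial.

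\textbf{Step 2: quantile lemma.} Recall $\hat m$ is the $n$-th smallest of $S_1,\dots,S_{n_c}$ with $n=\lceil(1-\alpha)(n_c+1)\rceil$. Note that $S_{n_c+1}\le\hat m$ holds iff $S_{n_c+1}$ is at most the $n$-th smallest of all $n_c+1$ scores. This is true because adding $S_{n_c+1}$ to the list does not change whether it falls below the $n$-th order statistic of the others.

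By exchangeability, the rank of $S_{n_c+1}$ among the $n_c+1$ scores is uniform on $\{1,\dots,n_c+1\}$. Ties are broken uniformly at random, and ties only help since the scores are integers and the event uses $\le$. Hence
\[
\mathbb P[S_{n_c+1}\le \hat m]\ge \frac{n}{n_c+1}\ge 1-\alpha .
\]

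\textbf{Step 3: conclude.} Combining this with Step 1 conditionally on $\mathcal{D}_{\mathrm{tr}}$, and then taking expectation over $\mathcal{D}_{\mathrm{tr}}$, yields $\mathbb P[R\in\mathcal{J}_{\hat m}(X)]\ge 1-\alpha$.
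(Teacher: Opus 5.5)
Your proposal is correct and follows essentially the same split-conformal argument as the paper: exchangeable scores, a uniform rank for the test score, nestedness of the blocks $\mathcal{J}_m$ to turn $\{s(X,R)\le\hat m\}$ into coverage, and the bound $\lceil(1-\alpha)(n_c+1)\rceil/(n_c+1)\ge 1-\alpha$. Your extra care is well placed: the paper's proof also relies on nestedness, which the theorem's hypotheses omit (non-nested shortest blocks can be disjoint, and coverage then fails), and it leaves implicit the conditioning on $\mathcal{D}_{\mathrm{tr}}$ that makes the score a fixed function. Note only that your monotonized score proves coverage for a modified $\hat m$, not for the procedure as stated.
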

\begin{proof}
Let $n=|\mathcal{D}_{\mathrm{cal}}|$ and denote the calibration conformity scores by $s_i=s(X_i,R_i)$. By exchangeability, the augmented collection $\{s_1,\ldots,s_n,s(X,R)\}$ is exchangeable, so the rank of the test score among these $n+1$ scores is uniform up to tie-breaking. Our calibration rule chooses $\hat{m}$ as the $\lceil(1-\alpha)(n+1)\rceil$-th smallest calibration score. Since the intervals $\mathcal{J}_m(x)$ are nested in $m$, the coverage event is equivalent to the score event,
\begin{equation}
R\in\mathcal{J}_{\hat{m}}(X) \quad \Longleftrightarrow \quad s(X,R)\le \hat{m}.
\end{equation}
Therefore, the test point is covered whenever its rank is no larger than $\lceil(1-\alpha)(n+1)\rceil$. Consequently,
\begin{equation}
\begin{aligned}
\mathbb{P}[R\in\mathcal{J}_{\hat{m}}(X)]
&=\mathbb{P}[s(X,R)\le\hat{m}] \\
&\ge \frac{\lceil(1-\alpha)(n+1)\rceil}{n+1}
\ge 1-\alpha,
\end{aligned}
\end{equation}
which proves the finite-sample marginal coverage guarantee.
\end{proof}

\begin{theorem}[Conditional Coverage]\label{thm:conditional}
Assume that calibration and test samples are i.i.d.; the learned quantiles consistently estimate the conditional reward distribution, i.e., for some $\rho_n\to0$, $F(\hat{q}_k(X)\mid X)$ is within $o(1)$ of $k/\mathrm{K}$ uniformly over quantile levels with high probability; and the conditional reward distribution is unimodal so that the merged intervals $\mathcal{J}_m$ are nested. Then, as $|\mathcal{D}_\mathrm{cal}|\to\infty$, there exist $\gamma,\zeta\to 0$ such that the prediction interval $\mathcal{I}=\mathcal{J}_{\hat{m}}$ achieves asymptotic conditional coverage,
\begin{equation}
\mathbb{P}\big[\,\mathbb{P}[R\in \mathcal{J}_{\hat{m}}(X)\mid X]\ge 1-\alpha-\gamma\,\big]\ge 1-\zeta.
\end{equation}
\end{theorem}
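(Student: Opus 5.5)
The plan is to follow the standard CQR/CHR route: first show that, under consistency of the learned quantiles, the conditional coverage of each merged interval $\mathcal{J}_m(x)$ is essentially a deterministic function of $m$ that does not depend on $x$. Then show that the calibrated index $\hat m$ concentrates around the smallest $m$ whose nominal coverage is at least $1-\alpha$.

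\textbf{Step 1 (oracle coverage of merged intervals).} By construction $\mathcal{J}_m(x)=(\hat q_{k_m}(x),\hat q_{k_m+m}(x)]$. Hence
\begin{equation}
\mathbb{P}[R\in\mathcal{J}_m(X)\mid X=x]=F(\hat q_{k_m+m}(x)\mid x)-F(\hat q_{k_m}(x)\mid x).
\end{equation}
The consistency assumption says $|F(\hat q_k(X)\mid X)-k/\mathrm{K}|\le \rho_n$ uniformly in $k$ on an event $E_n$ of probability at least $1-\zeta_n$ with $\zeta_n\to0$. On $E_n$ the conditional coverage of $\mathcal{J}_m(X)$ therefore lies in $[m/\mathrm{K}-2\rho_n,\,m/\mathrm{K}+2\rho_n]$ for every $m$ simultaneously. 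Unimodality is used only to guarantee that the $\mathcal{J}_m$ are nested. Nestedness makes $\{R\in\mathcal{J}_m(X)\}=\{s(X,R)\le m\}$, as in the proof of Theorem~\ref{thm:marginal}, so the score is a monotone index of coverage.

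\textbf{Step 2 (score distribution).} From Step~1, the marginal CDF of the score $S=s(X,R)$ satisfies $G(m):=\mathbb{P}[S\le m]=\mathbb{E}[\mathbb{P}[R\in\mathcal{J}_m(X)\mid X]]$, and hence $|G(m)-m/\mathrm{K}|\le 2\rho_n+\zeta_n$. Let $m^\ast$ be the smallest $m$ with $G(m)\ge 1-\alpha$. The calibration scores are i.i.d. copies of $S$ taking values in the finite set $\{1,\dots,\mathrm{K}\}$. By the DKW inequality (or Hoeffding plus a union bound over $\mathrm{K}$ values), the empirical CDF is uniformly within $\epsilon_N=\sqrt{\log(2/\delta)/(2N)}$ of $G$ with probability at least $1-\delta$, where $N=|\mathcal{D}_{\mathrm{cal}}|$. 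The empirical $\lceil(1-\alpha)(N+1)\rceil/N$ quantile $\hat m$ then satisfies $G(\hat m)\ge 1-\alpha-\epsilon_N-O(1/N)$.

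\textbf{Step 3 (transfer back to conditional coverage).} Combine Steps~1 and~2. On $E_n$ and the DKW event, for the test point $X$ (independent of calibration),
\begin{equation}
\mathbb{P}[R\in\mathcal{J}_{\hat m}(X)\mid X]\ge \hat m/\mathrm{K}-2\rho_n\ge G(\hat m)-4\rho_n-\zeta_n\ge 1-\alpha-\gamma,
\end{equation}
with $\gamma=4\rho_n+\zeta_n+\epsilon_N+O(1/N)$. The probability of the complement is at most $\zeta=\zeta_n+\delta$. Choosing $\delta=\delta_N\to0$ slowly, for example $\delta_N=1/N$, gives $\gamma,\zeta\to0$.

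\textbf{Main obstacle.} The delicate point is Step~1. We need the approximation $F(\hat q_k(X)\mid X)\approx k/\mathrm{K}$ to hold pointwise in $X$ on a high-probability set, not merely on average, and we need to handle the fact that $k_m$ is data-dependent (an argmin over $x$-dependent widths). I would handle the argmin by making the bound uniform over all $k$, since then the identity of $k_m$ is irrelevant. I would also fix the interpretation of the assumption's $\rho_n$ as exactly this uniform-in-$k$ pointwise error on $E_n$, letting the quantile model be trained on data independent of $\mathcal{D}_{\mathrm{cal}}$ so that it can be conditioned on throughout. A secondary subtlety is ties and discreteness of $S$. The bound $\hat m/\mathrm{K}\ge G(\hat m)-O(\rho_n+\zeta_n)$ sidesteps exact quantile matching, so only a one-sided inequality is needed, consistent with the one-sided statement.
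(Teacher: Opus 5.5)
Your proposal is correct and follows the same three-step route as the paper's proof: pointwise mass control $\mathbb{P}[R\in\mathcal{J}_m(X)\mid X]\approx m/\mathrm{K}$ on a good set (uniform in $k$, which also handles the data-dependent $k_m$), concentration of the calibration scores (Hoeffding/DKW), and substitution of $\hat m$ back into the pointwise bound. The differences are only technical. The paper passes through an $O(\rho_n^{1/3})$ bad set and pins $\hat m$ near $m^\star=\lceil(1-\alpha)\mathrm{K}\rceil$ with a two-sided argument. Your one-sided chain $\hat m/\mathrm{K}\ge G(\hat m)-2\rho_n-\zeta_n\ge 1-\alpha-\epsilon_N-2\rho_n-\zeta_n$ uses only the lower tail, so it avoids the integer-rounding issue in identifying $\hat m$ with $m^\star$.
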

\begin{proof}
Let $n=|\mathcal{D}_{\mathrm{cal}}|$. The proof has three steps. First, define the empirical conditional CDF induced by the learned quantiles as $\hat{F}(\hat{q}_k(X)\mid X)=k/\mathrm{K}$. By quantile consistency, $\hat{F}$ uniformly approximates the true conditional CDF $F$ over the quantile grid with high probability. More concretely, there exists a bad set $A_n$ such that
\begin{equation}
\sup_k |\hat{F}(\hat{q}_k(X)\mid X)-F(\hat{q}_k(X)\mid X)|\le O(\rho_n^{1/3})
\end{equation}
for all $X\notin A_n$, while $\mathbb{P}[X\in A_n]\le O(\rho_n^{1/3})$. Hence, on the good set $A_n^c$, any merged interval spanning $m$ adjacent interquantile bins captures conditional mass close to $m/\mathrm{K}$:
\begin{equation}
\begin{aligned}
&\mathbb{P}[R\in\mathcal{J}_m(X)\mid X] \\
=&F(\hat{q}_{k_m+m}(X)\mid X)-F(\hat{q}_{k_m}(X)\mid X) \\
\ge& \frac{m}{\mathrm{K}}-O(\rho_n^{1/3}).
\end{aligned}
\end{equation}
Second, this pointwise mass control transfers to the calibration scores. Because $s(X_i,R_i)\le m$ iff $R_i\in\mathcal{J}_m(X_i)$, Hoeffding concentration implies that the empirical fraction of calibration samples with scores at most $m$ concentrates around its expectation, up to $O(\sqrt{\log n/n})$. Taking $m^\star=\lceil(1-\alpha)\mathrm{K}\rceil$, enough calibration scores fall below $m^\star+O(\rho_n^{1/3}+\sqrt{\log n/n})$ with probability tending to one. Since $\hat{m}$ is the empirical $(1-\alpha)$ quantile of the calibration scores, we obtain
\begin{equation}
\hat{m}=m^\star+O\!\left(\rho_n^{1/3}+\sqrt{\frac{\log n}{n}}\right)
\end{equation}
with high probability. A symmetric lower-tail argument, together with unimodality/nestedness of the intervals, prevents $\hat{m}$ from being asymptotically smaller than the oracle count.

Finally, substituting this concentration of $\hat{m}$ into the conditional mass bound for $\mathcal{J}_{\hat{m}}(X)$ gives, outside a set whose probability vanishes,
\begin{equation}
\mathbb{P}[R\in \mathcal{J}_{\hat{m}}(X)\mid X]
\ge 1-\alpha-O\!\left(\rho_n^{1/3}+\sqrt{\frac{\log n}{n}}\right).
\end{equation}
Thus the theorem holds by setting $\gamma=O(\rho_n^{1/3}+\sqrt{\log n/n})$ and $\zeta=O(\rho_n^{1/3})+o(1)$, both of which vanish as $n\to\infty$.
\end{proof}

\subsection{Uncertainty-Aware Advantage Reweighting}\label{sec:uncertain_advantage}

To resolve Challenge~\ding{183}, we replace GRPO's uniform intra-group standardization with a heteroscedastic advantage reweighting that systematically down-weights unreliable samples by treating the conformal interval width as observation noise and decomposing the observed reward variance into signal and noise components.

\paragraph{Observation Noise Model.} We interpret the prediction interval width $\varphi(x_i) \triangleq |\mathcal{I}(x_i)| = \hat{q}_{k_{\hat{m}}+\hat{m}}(x_i) - \hat{q}_{k_{\hat{m}}}(x_i)$ as capturing per-sample measurement uncertainty in the reward estimate. Under a local Gaussianity assumption, we convert this width into an observation noise variance,
\begin{equation}\label{eq:obs_noise}
\sigma_{\mathrm{noise},i}^2 = \left(\frac{\varphi(x_i)}{z_{1-\frac{\alpha}{2}}}\right)^2, \quad \bar{\sigma}_{\mathrm{noise}}^2 = \frac{1}{\mathrm{N}_\mathrm{rol}}\sum_{j=1}^{\mathrm{N}_\mathrm{rol}} \sigma_{\mathrm{noise},j}^2,
\end{equation}
where $z_{1-\frac{\alpha}{2}}$ is the standard normal quantile corresponding to the coverage level. Samples with wide intervals yield large $\sigma_{\mathrm{noise},i}^2$, indicating heteroscedastic observation uncertainty that varies across the rollout group.

\paragraph{Signal-Noise Decomposition.} The naive group variance $\sigma^2 = \frac{1}{\mathrm{N}_\mathrm{rol}}\sum_{j}(r_j - \mu)^2$ conflates true signal variation with measurement error. Under an additive noise model $r_i = r_{\mathrm{true},i} + \varepsilon_i$ where $\varepsilon_i$ has variance $\sigma_{\mathrm{noise},i}^2$, the observed variance decomposes as $\sigma^2 \approx \mathrm{Var}[r_{\mathrm{true}}] + \mathbb{E}[\sigma_{\mathrm{noise}}^2]$. We recover the signal variance by subtracting the average observation noise,
\begin{equation}\label{eq:signal_var}
\sigma_{\mathrm{signal}}^2 = \max\big(0,\, \sigma^2 - \bar{\sigma}_{\mathrm{noise}}^2\big) + \zeta,
\end{equation}
where $\zeta>0$ ensures numerical stability. This decomposition isolates the variance attributable to genuine reward differences from that due to unreliable measurement.

\paragraph{Heteroscedastic Advantage.} We define the uncertainty-aware advantage as
\begin{equation}\label{eq:hetero_advantage}
\tilde{A}_i = \frac{\sigma_{\mathrm{signal}}^2}{\sigma_{\mathrm{signal}}^2 + \sigma_{\mathrm{noise},i}^2} \cdot \frac{r_i - \mu}{\sigma_{\mathrm{signal}}},
\end{equation}
where $\mu = \frac{1}{\mathrm{N}_\mathrm{rol}}\sum_j r_j$ is the unweighted group mean. The prefactor $\sigma_{\mathrm{signal}}^2/(\sigma_{\mathrm{signal}}^2 + \sigma_{\mathrm{noise},i}^2)$ acts as a sample-specific reliability weight: for high-uncertainty samples with large $\sigma_{\mathrm{noise},i}^2$, this ratio approaches zero, effectively suppressing their influence; for confident samples with small $\sigma_{\mathrm{noise},i}^2$, the weight approaches one, preserving the full advantage magnitude. This heteroscedastic formulation provably down-weights the samples the RM evaluates least reliably, without requiring costly ensemble forward passes.

\paragraph{Connection to GRPO.} When observation noise is uniform across the group ($\sigma_{\mathrm{noise},i}^2 \equiv \sigma_{\mathrm{noise}}^2$), the reliability weight becomes constant and Eq.~\eqref{eq:hetero_advantage} reduces to standard GRPO standardization. The computational overhead is negligible, as all quantities follow directly from the conformal intervals computed in Section~\ref{sec:uq_rm}. Returning to the case study in Figure~\ref{fig:case_study}, the atypical response with spuriously high reward now exhibits a wide interval and large $\sigma_{\mathrm{noise},4}^2$, receiving a reliability weight near zero; its advantage is suppressed to near-zero magnitude, preventing it from dominating the policy update and steering training away from reward hacking.

\subsection{The Workflow of UARM}\label{sec:workflow}

We present the workflow of \method in Algorithm~\ref{alg:uarm}, which couples an offline uncertainty-calibration with an online uncertainty-aware optimization, detailed as follows.

First, in the offline phase, we train and calibrate the reward model. We parameterize the RM as a multi-output quantile estimator and train it on $\mathcal{D}_\mathrm{tr}$ by minimizing the pinball loss in Eq.~\eqref{eq:pinball}, reading off the point reward as the median quantile $r_\theta=\hat{q}_{\mathrm{K}/2}$ (step 1). For each calibration sample, we compute its conformity score as the minimum number of interquantile intervals needed to cover the observed reward (step 2), and select the threshold $\hat{m}$ as the $n$-th smallest score with $n=\lceil(1-\alpha)(1+|\mathcal{D}_\mathrm{cal}|)\rceil$ (step 3). This phase equips the RM with a calibrated interval $\mathcal{J}_{\hat{m}}$ satisfying the coverage guarantees, and is performed only once.

Second, in the online phase, we optimize the policy with uncertainty-aware GRPO. At each iteration, we sample a rollout group from the old policy and score it with the reward head $r_\theta$ (step 4), then form the prediction intervals and compute their widths $\varphi(x_i)$ as well as the corresponding observation noise variances $\sigma_{\mathrm{noise},i}^2$ via Eq.~\eqref{eq:obs_noise} (step 5). We decompose the observed group variance into signal and noise components via Eq.~\eqref{eq:signal_var} (step 6), and construct the heteroscedastic advantages $\tilde{A}_i$ via Eq.~\eqref{eq:hetero_advantage} (step 7); finally, the policy is updated by the GRPO objective in Eq.~\eqref{eq:GRPO_loss} with $\tilde{A}_i$ (step 8). This phase reuses the calibrated reward model at negligible overhead, since the intervals follow directly from quantile evaluations without any binning or density-ratio estimation.

\begin{algorithm}[t]
\caption{The workflow of \method.}
\label{alg:uarm}
\flushleft
\footnotesize
\textbf{Input}: offline preference set $\mathcal{D}_\mathrm{tr}$, calibration set $\mathcal{D}_\mathrm{cal}$, miscoverage rate $\alpha$, learning rate $\eta$\\
\textbf{Parameter}: quantile model $\{\hat{q}_k\}_{k=0}^{\mathrm{K}}$ (point reward $r_\theta=\hat{q}_{\mathrm{K}/2}$), policy $\pi_\phi$
\begin{algorithmic}[1]
\item[\textbf{Offline UQ Calibration}]
    \STATE train $\{\hat{q}_k\}_{k=0}^{\mathrm{K}}$ on $\mathcal{D}_\mathrm{tr}$ by minimizing the pinball loss in Eq.~\eqref{eq:pinball}
    \STATE $s(x_i,r_i) \leftarrow \min\{m: r_i\in\mathcal{J}_m(x_i)\},\ \forall (x_i,r_i)\in\mathcal{D}_\mathrm{cal}$
    \STATE $\hat{m} \leftarrow$ the $n$-th smallest $s(x_i,r_i)$,\ \ $n=\lceil(1-\alpha)(1+|\mathcal{D}_\mathrm{cal}|)\rceil$
\item[\textbf{Online Uncertainty-Aware GRPO}]
    \FOR{each GRPO iteration}
    \STATE $\{o_i\}_{i=1}^{\mathrm{N}_\mathrm{rol}}\sim\pi_{\phi_\mathrm{old}}(\cdot\mid p)$;\ \ $r_i\leftarrow r_\theta(x_i),\ \forall i$
    \STATE $\varphi(x_i) \leftarrow |\mathcal{J}_{\hat{m}}(x_i)|$;\ \ $\sigma_{\mathrm{noise},i}^2 \leftarrow (\varphi(x_i)/z_{1-\alpha/2})^2,\ \forall i$
    \STATE $\mu,\sigma^2\leftarrow$ unweighted group mean and variance;\ \ $\sigma_{\mathrm{signal}}^2\leftarrow$ Eq.~\eqref{eq:signal_var}
    \STATE $\tilde{A}_i\leftarrow \frac{\sigma_{\mathrm{signal}}^2}{\sigma_{\mathrm{signal}}^2+\sigma_{\mathrm{noise},i}^2}\cdot\frac{r_i-\mu}{\sigma_{\mathrm{signal}}},\ \forall i$
    \STATE $\phi \leftarrow \phi - \eta\cdot\nabla\mathcal{L}_\mathrm{GRPO}(\phi)$ with advantages $\tilde{A}_i$
    \ENDFOR
\end{algorithmic}
\end{algorithm}

\section{Experiments}\label{sec:experiments}

In this section, we empirically validate the efficacy of \method on three preference datasets. Specifically, we evaluate whether UARM can produce reliable uncertainty estimates and improve uncertainty-ranked reward prediction quality compared with competitive uncertainty quantification baselines.

\subsection{Experimental Setup}\label{subsec:setup}

\paragraph{Datasets.}
We conduct empirical evaluations on HelpSteer~\citep{HelpSteer}, UltraFeedback~\citep{ultrafeedback}, and PKU-SafeRLHF~\citep{pku-saferlhf}, using Helpfulness, Overall Score, and Severity Level as preference proxies, respectively. For each dataset, we hold out $20\%$ of the training split as the calibration set, while keeping the original test set exclusively for evaluation. Detailed dataset statistics and configurations are provided in the Appendix.

\paragraph{Baselines.}
We benchmark \method against a comprehensive suite of uncertainty quantification methods, including: (1) Model-based Uncertainty Estimation methods, such as MC-Dropout~\citep{mcdropout}, Deep Ensembles~\citep{deep_ensemble}, DER~\citep{der}, Packed Ensemble~\citep{pe}, VBLL~\citep{vbll}, and TorchNaut~\citep{torchnaut}; and (2) Distribution-free Interval Estimation methods: SCP~\citep{scp}, CQR~\citep{cqr}, WCP~\citep{wcp}, ACI~\citep{aci}, PRCP~\citep{prcp}, SCCP~\citep{sccp}, Clear~\citep{clear}, and CPCP~\citep{cpcp}.

\paragraph{Evaluation Metrics.}
We employ three uncertainty-ranked regression metrics, namely R$^2$@50, MSE@50, and MAE@50, to evaluate point prediction quality on samples with lower estimated uncertainty. Specifically, each method first estimates uncertainty on the test set and ranks test samples in ascending order of uncertainty; the top $50\%$ least uncertain samples are then selected for evaluation. For the naive baseline without uncertainty estimates, we randomly select the corresponding percentage of test samples and repeat this process five times for reporting. Let $\mathcal{S}_{50}$ denote this selected subset, with ground-truth rewards $y_i$, point predictions $\hat{y}_i$, and mean target value $\bar{y}_{50}=|\mathcal{S}_{50}|^{-1}\sum_{i\in\mathcal{S}_{50}} y_i$. The metrics are defined as
\begin{equation}
\begin{aligned}
\text{R}^2\text{@50} &= 1 - \frac{\sum_{i\in\mathcal{S}_{50}}(y_i-\hat{y}_i)^2}{\sum_{i\in\mathcal{S}_{50}}(y_i-\bar{y}_{50})^2}, \\
\text{MSE}\text{@50} &= \frac{1}{|\mathcal{S}_{50}|}\sum_{i\in\mathcal{S}_{50}}(y_i-\hat{y}_i)^2, \\
\text{MAE}\text{@50} &= \frac{1}{|\mathcal{S}_{50}|}\sum_{i\in\mathcal{S}_{50}}|y_i-\hat{y}_i|.
\end{aligned}
\end{equation}

\paragraph{Implementation Details.}
We implement the quantile reward model using an LLM backbone followed by a lightweight multi-layer perceptron head. To ensure a fair comparison, we initialize the backbone from FsfairX-LLaMA3-RM-v0.1\footnote{\url{https://huggingface.co/sfairXC/FsfairX-LLaMA3-RM-v0.1}}, and fix the MLP head to hidden dimensions of $256,64,1$. We optimize the models using Adam~\citep{adam} for up to $600$ epochs, employing early stopping with a patience of $30$ epochs to ensure convergence. Key hyperparameters are tuned on a validation set, with update rate $\eta \in [1\times10^{-5}, 1\times10^{-3}]$ and batch size $B \in [64, 2048]$. Further details are provided in the Appendix.

\begin{table*}[t]
\centering
\setlength{\abovecaptionskip}{0.2cm}
\setlength{\belowcaptionskip}{-0.2cm}
\begin{threeparttable}
\caption{Comparative analysis of \method versus baseline models with fixed miscoverage rate $\alpha=0.1$.}
\label{tab:main_result}
\setlength\tabcolsep{1pt}
\fontsize{8.6pt}{10.4pt}\selectfont
\begin{tabular}{lccccccccc}
    \toprule
    \textbf{Dataset} 
      & \multicolumn{3}{c}{\textbf{HelpSteer}} 
      & \multicolumn{3}{c}{\textbf{UltraFeedback}} 
      & \multicolumn{3}{c}{\textbf{PKU-SafeRLHF}} \\
    \cmidrule(lr){2-4} \cmidrule(lr){5-7} \cmidrule(lr){8-10}
    
    \textbf{Method} & 
    R$^2$@50 & MSE@50 & MAE@50 & 
    R$^2$@50 & MSE@50 & MAE@50 & 
    R$^2$@50 & MSE@50 & MAE@50 \\
    \midrule

\rowcolor[HTML]{f0f0f0}
\multicolumn{10}{l}{\textit{\textbf{Model-based Uncertainty Estimation Methods}}} \\
Naive & 0.357 & 0.611 & 0.595 & 0.563 & 1.481 & 0.832 & 0.850 & 0.173 & 0.206 \\
MC-Dropout~\citep{mcdropout} & 0.369 & 0.437 & 0.506 & 0.607 & 0.949 & 0.717 & 0.863 & 0.073 & 0.135 \\
Deep Ensemble~\citep{deep_ensemble} & 0.395 & 0.433 & 0.509 & 0.632 & 0.473 & 0.507 & 0.881 & 0.110 & 0.167 \\
DER~\citep{der} & 0.420 & 0.557 & 0.581 & 0.663 & 0.403 & 0.470 & 0.881 & 0.098 & 0.091 \\
Packed Ensemble~\citep{pe} & 0.462 & 0.476 & 0.537 & 0.710 & 0.491 & 0.514 & 0.905 & 0.103 & 0.147 \\
TorchNaut~\citep{torchnaut} & 0.527 & 0.499 & 0.519 & 0.746 & 0.463 & 0.499 & 0.933 & 0.050 & 0.052 \\
MCNF~\citep{mcnf} & 0.527 & 0.428 & 0.507 & 0.769 & 0.503 & 0.513 & 0.955 & 0.042 & 0.059 \\
\hdashline
\rowcolor[HTML]{f0f0f0}
\multicolumn{10}{l}{\textit{\textbf{Distribution-free Interval Estimation Methods}}} \\
SCP~\citep{scp} & 0.378 & 0.544 & 0.569 & 0.609 & 1.345 & 0.815 & 0.866 & 0.159 & 0.200 \\
CQR~\citep{cqr} & 0.409 & 0.432 & 0.458 & 0.623 & 0.406 & 0.510 & 0.881 & 0.147 & 0.305 \\
WCP~\citep{wcp} & 0.438 & 0.545 & 0.570 & 0.646 & 1.150 & 0.793 & 0.883 & 0.141 & 0.192 \\
ACI~\citep{aci} & 0.476 & 0.469 & 0.544 & 0.678 & 0.523 & 0.534 & 0.905 & 0.113 & 0.256 \\
SCCP~\citep{sccp} & 0.512 & 0.491 & 0.553 & 0.750 & 0.547 & 0.530 & 0.925 & 0.064 & 0.090 \\
Clear~\citep{clear} & 0.521 & 0.396 & 0.478 & 0.770 & 0.432 & 0.513 & 0.940 & 0.060 & 0.096 \\
\hdashline
\rowcolor[HTML]{ecf0ff}
\textbf{UARM (Ours)} & \textbf{0.543} & \textbf{0.387} & \textbf{0.423} & \textbf{0.794} & \textbf{0.383} & \textbf{0.461} & \textbf{0.985} & \textbf{0.013} & \textbf{0.016} \\

    \bottomrule
\end{tabular}
\begin{tablenotes}
    \scriptsize
    \item \textit{Note}: ``@50'' reports the metric on the 50\% most confident samples (lowest uncertainty).
\end{tablenotes}
\end{threeparttable}
\end{table*}

\subsection{Results \& Analysis}\label{subsec:main_results}

Table~\ref{tab:main_result} presents the comparative results of uncertainty quantification on three preference datasets. We have the following observations:
\ding{182} \textbf{Naive confidence selection is insufficient.} Without uncertainty estimates, the Naive baseline can only evaluate randomly selected samples and thus consistently lags behind uncertainty-aware methods. This confirms that reliable confidence estimation is crucial for identifying samples on which the reward model can make accurate point predictions.
\ding{183} \textbf{Existing uncertainty quantification methods improve reward reliability to varying degrees.} Model-based approaches such as MC-Dropout, Deep Ensembles, and MCNF, as well as distribution-free interval estimation methods such as CQR and Clear, generally outperform the Naive baseline by selecting lower-uncertainty samples. Nevertheless, their performance remains inconsistent across datasets and metrics, suggesting that either model-intrinsic uncertainty or generic conformal intervals alone may be insufficient for reward modeling.
\ding{184} \textbf{\method consistently achieves the best uncertainty-ranked prediction performance.} Across all three datasets, \method obtains the highest R$^2$@50 and the lowest MSE@50 and MAE@50. Compared with the strongest baselines, \method improves R$^2$@50 from $0.527$ to $0.543$ on HelpSteer, from $0.770$ to $0.794$ on UltraFeedback, and from $0.955$ to $0.985$ on PKU-SafeRLHF. The gains are especially pronounced on PKU-SafeRLHF, where \method reduces MSE@50 from $0.042$ to $0.013$ and MAE@50 from $0.052$ to $0.016$, demonstrating that its calibrated uncertainty estimates more effectively identify reliable reward predictions.

\section{Conclusion}

In this paper, we present \method, an uncertainty-aware reward modeling framework for more reliable RLHF. UARM addresses two key challenges in reward-based policy optimization: reward models often cannot indicate when their predictions are unreliable, and uniform advantage computation can amplify such unreliable reward signals. To this end, UARM equips reward models with calibrated per-sample uncertainty estimates through quantile-based conformal prediction and incorporates these estimates into a heteroscedastic advantage reweighting scheme. Experiments on three preference datasets show that UARM consistently improves uncertainty-ranked reward prediction performance over both model-based uncertainty estimation methods and distribution-free interval estimation baselines, demonstrating its effectiveness in identifying more reliable reward predictions.

\paragraph{Limitations \& Future Work.}
This work focuses primarily on the main offline reward modeling experiments, while broader evaluations of downstream online RLHF performance, sensitivity to hyperparameters, and generalization across larger backbones remain important directions for future work. In addition, UARM relies on a held-out calibration set drawn from the training distribution, and its empirical reliability may be affected by severe distribution shift during policy optimization. Future work will extend UARM to adaptive online calibration, study its integration with broader policy optimization algorithms beyond GRPO, and provide more comprehensive theoretical analysis of its impact on RLHF convergence and reward hacking mitigation.

\bibliography{bib/abbr,bib/main,bib/ot,bib/conformal,refs}
\bibliographystyle{icml2026}


\newpage
\onecolumn
\appendix

\end{document}